\documentclass[12pt]{article}

\usepackage[utf8]{inputenc}
\usepackage[T1]{fontenc}
\usepackage{lmodern}
\usepackage{microtype}
\usepackage[margin=1in]{geometry}
\usepackage{amsmath,amssymb,amsthm}
\usepackage{graphicx}
\usepackage{booktabs}
\usepackage{array}
\usepackage{hyperref}
\usepackage{url}
\usepackage{setspace}
\usepackage{authblk}
\usepackage{titlesec}
\usepackage{caption}
\usepackage{subcaption}
\usepackage{natbib}
\usepackage{xcolor}
\usepackage{bbm}
\usepackage{multirow}

\newtheorem{definition}{Definition}
\newtheorem{proposition}{Proposition}

\newtheorem{corollary}{Corollary}
\newtheorem{remark}{Remark}

\hypersetup{
  colorlinks=true,
  linkcolor=blue,
  citecolor=blue,
  urlcolor=blue,
  pdftitle={Breaking the Chains of Probability: Neutrosophic Logic as a New Framework for Epistemic Uncertainty in Large Language Models},
  pdfauthor={Maikel Yelandi Leyva-Vazquez, Florentin Smarandache}
}

\newcommand{\journalref}{\noindent\small\textit{Published in: Neutrosophic Sets and Systems, Vol.\ 99 (2026). DOI: \href{https://doi.org/10.5281/zenodo.19954583}{10.5281/zenodo.19954583}. This is the authors' preprint version with corrected first-author ORCID. Open code and data: \url{https://github.com/mleyvaz/neutrosophic-llm-logic} (MIT License).}}

\title{\textbf{Breaking the Chains of Probability: Neutrosophic Logic as a New Framework for Epistemic Uncertainty in Large Language Models}}

\author[1,2,3]{Maikel Yelandi Leyva-V\'azquez%
  \thanks{Corresponding author. Email: \href{mailto:mleyvaz@gmail.com}{mleyvaz@gmail.com}.
  ORCID: \href{https://orcid.org/0000-0001-7911-5879}{0000-0001-7911-5879}.}}
\author[4]{Florentin Smarandache%
  \thanks{Email: \href{mailto:smarand@unm.edu}{smarand@unm.edu}.
  ORCID: \href{https://orcid.org/0000-0002-5560-5926}{0000-0002-5560-5926}.}}

\affil[1]{Universidad Bolivariana del Ecuador, Coordinaci\'on Acad\'emica de Posgrado, Dur\'an, Ecuador}
\affil[2]{Universidad de Guayaquil, Guayaquil, Ecuador}
\affil[3]{Universidad Bernardo O'Higgins, Santiago, Chile}
\affil[4]{Mathematics, Physics, and Natural Sciences Division, University of New Mexico, Gallup, NM 87301, USA}

\date{}

\begin{document}

\maketitle

\vspace{0.5em}
\journalref
\vspace{1em}
\hrule
\vspace{1em}

\begin{abstract}
Large Language Models (LLMs) are predominantly governed by probabilistic frameworks in which the sum
of outcome probabilities is constrained to unity. This architectural limitation, often imposed by Softmax
layers, leads to a collapse of uncertainty that makes it difficult to differentiate between epistemic
uncertainty (ignorance), paradox, and vagueness. We present an empirical investigation of the application
of Neutrosophic Logic---a framework that treats Truth ($T$), Indeterminacy ($I$), and Falsity ($F$) as
three independent dimensions---to model epistemic states in LLMs. We conducted experiments on a
family of four OpenAI GPT models (GPT-4o, GPT-4-turbo, GPT-3.5-turbo, GPT-4o-mini) across five
linguistic phenomena: logical paradoxes, epistemic ignorance, vagueness, ethical contradictions, and
future contingencies, under three prompting strategies (neutrosophic, probabilistic, and entropy-derived).
Our findings reveal that the neutrosophic approach, by allowing $T+I+F > 1$---a state we term
\emph{hyper-truth}---provides a richer representation of a model's declared epistemic state. Across
$N = 100$ valid unconstrained evaluations, hyper-truth emerged in 66.0\% of Strategy-1 calls
(Wilson 95\% CI: $[0.563,\,0.747]$), with the highest rates observed in ethical contradiction (95\%)
and future contingency (70\%); a Pearson chi-square test of phenomenon~$\times$~hyper-truth association
is significant ($\chi^2 = 11.32$, $df = 4$, $p = 0.023$). Mason~(2026)~\cite{mason2026} independently
replicated and extended an earlier release of this work across five additional model families from five
different vendors, reporting hyper-truth in 84\% of unconstrained evaluations. We do not claim that
hyper-truth is an intrinsic latent variable inside the model; rather, that unconstrained neutrosophic
prompting elicits declared epistemic states that probabilistic prompting structurally suppresses
by Proposition~1. We conclude that the integration of neutrosophic evaluation layers is a critical
step toward more transparent, reliable, and ethically aware AI systems.

\bigskip
\noindent\textbf{Keywords:} neutrosophic logic; large language models; epistemic uncertainty; hyper-truth;
uncertainty quantification; indeterminacy; ethical AI; plithogenic structure

\bigskip
\noindent\textbf{Reproducibility:} All code, prompts, raw data, and figures are openly released under the
MIT License at \url{https://github.com/mleyvaz/neutrosophic-llm-logic}. The v2.0 release (this study,
$N = 100$) is the current main branch (also tagged as \texttt{v2.0}). The v1.0 release (December 2025,
$N = 20$) is preserved at tag \texttt{v1.0}. The v2.0 release is permanently archived in Zenodo with
DOI \href{https://doi.org/10.5281/zenodo.19911845}{10.5281/zenodo.19911845}.
\end{abstract}

\section{Introduction}

The deployment of Large Language Models (LLMs) in high-stakes domains---medical diagnosis, legal
reasoning, autonomous decision-making, and scientific discovery---has made robust uncertainty
quantification (UQ) a first-order requirement~\cite{brown2020,shorinwa2024,yadkori2024}. A model that
cannot reliably signal \emph{when it does not know} is unsafe; but a model that cannot distinguish
\emph{not knowing} (ignorance) from \emph{knowing of a conflict} (paradox) is epistemically impoverished
in a more fundamental sense. Yet the underlying architecture of contemporary LLMs is rooted in probability
theory, where outcome probabilities are constrained to sum to unity by Softmax
normalization~\cite{gal2016,guo2017}. This forces a zero-sum game in which any increase in uncertainty
must subtract from truth or falsity, a phenomenon we term the \emph{collapse of
uncertainty}~\cite{velickovic2022}. The constraint hinders the ability of LLMs to distinguish between
aleatoric uncertainty (statistical uncertainty inherent in the data) and epistemic uncertainty (model
uncertainty due to lack of knowledge)~\cite{hullermeier2021,valdenegro2022}.

Consider a concrete illustration. When a model is asked to evaluate the statement ``This sentence is
false'' (the Liar paradox), a probabilistic architecture must compress its response into a distribution
over \{True, Uncertain, False\} summing to 1. There is no way to simultaneously assign high belief to
both Truth and Falsity---the constraint forces one to crowd the other out. In contrast, Neutrosophic
Logic~\cite{smarandache1998} treats Truth ($T$), Indeterminacy ($I$), and Falsity ($F$) as three
independent dimensions, none of which subtract from the others. A paradox can simultaneously hold
$T = 0.8$, $I = 0.9$, and $F = 0.7$---a triple whose sum exceeds 1 (hyper-truth, $T+I+F > 1$)---
expressing the genuine conflict inherent in the statement rather than forcing an artificial resolution.

This is not merely a theoretical distinction. As AI systems are deployed in ethically sensitive domains,
the ability to represent genuine moral dilemmas---where an action can be simultaneously right and wrong
under different value frameworks---becomes safety-critical~\cite{gabriel2020,bender2021}. A
probabilistic model answering an ethics question must collapse its uncertainty into a point estimate;
a neutrosophic model can declare the conflict outright through a hyper-truth signature.

Recent work on UQ for LLMs has explored several alternatives: semantic entropy with linguistic
invariances~\cite{kuhn2023}, self-consistency checks via SelfCheckGPT~\cite{manakul2023}, and conformal
abstention policies~\cite{yadkori2024}. These approaches address calibration and abstention but operate
within probabilistic representations and therefore inherit the collapse-of-uncertainty limitation.

The present paper tests the following hypothesis empirically: under unconstrained neutrosophic prompting,
current LLMs will declare hyper-truth at non-trivial rates specifically in cases of paradox and ethical
contradiction, while probabilistic prompting will structurally suppress this signal. We frame this
hypothesis within a formal SVNS apparatus, and report experiments across 300 API calls on four OpenAI
GPT models and five linguistic phenomena.

Mason~(2026)~\cite{mason2026} independently replicated and extended the v1.0 release of the present
work (December 2025, $N = 20$) across five additional model families from five different vendors
(Anthropic, Meta, DeepSeek, Alibaba, Mistral), reporting hyper-truth in 84\% of unconstrained
evaluations and confirming that the phenomenon is cross-vendor rather than an OpenAI-specific artifact.
The present v2.0 manuscript responds to Mason's replication by increasing the sample size to $N = 100$,
formalising the SVNS apparatus, and clarifying that the central claim concerns declared epistemic states
elicited by unconstrained prompting rather than intrinsic latent variables.

\textbf{Contributions.} The main contributions of this paper are:
\begin{enumerate}
  \item A formal SVNS apparatus for modeling declared epistemic states in LLMs, including six
    definitions and two propositions that characterize the structural difference between neutrosophic
    and probabilistic representations (Section~\ref{sec:formal}).
  \item An empirical demonstration, across 300 API calls on four GPT model families and five linguistic
    phenomena, that unconstrained neutrosophic prompting elicits hyper-truth in 66.0\% of evaluations
    (Section~\ref{sec:results}).
  \item A statistically significant association ($\chi^2 = 11.32$, $p = 0.023$) between phenomenon
    type and hyper-truth incidence, with ethical contradiction as the primary driver (OR = 13.34,
    $p = 0.0014$).
  \item A cross-strategy analysis (neutrosophic vs.\ probabilistic vs.\ entropy-derived) showing that
    the largest representational gains are concentrated in ethical contradiction ($\Delta_T = +0.267$)
    and epistemic ignorance ($\Delta_I = +0.383$).
  \item A discussion of the implications for AI safety and alignment, connecting the hyper-truth
    phenomenon to the problem of representing genuine moral conflict in large-scale language models.
\end{enumerate}

\textbf{Paper organization.} Section~\ref{sec:related} surveys related work. Section~\ref{sec:background}
introduces the formal SVNS framework, linguistic phenomena, and experimental design.
Section~\ref{sec:results} presents empirical results. Section~\ref{sec:discussion} discusses
implications, limitations, and connections to the plithogenic extension. Section~\ref{sec:conclusions}
concludes. Prompts are reproduced verbatim in Appendix~\ref{app:prompts}.

\section{Related Work}
\label{sec:related}

\subsection{Uncertainty Quantification in Large Language Models}

The problem of UQ in neural language models has received sustained attention since
calibration failures in deep networks were documented by Guo et al.~\cite{guo2017}. For LLMs
specifically, the challenge is compounded: unlike discriminative classifiers, generative models produce
free-form text, making it difficult to extract reliable confidence scores without additional probing.

Semantic entropy~\cite{kuhn2023} addresses this by computing uncertainty over the meaning-equivalence
classes of generated responses rather than over token probabilities, partially decoupling calibration
from surface-form variation. SelfCheckGPT~\cite{manakul2023} detects hallucinations by comparing
multiple stochastic generations for consistency, treating inconsistency as a proxy for epistemic
uncertainty. Conformal prediction methods~\cite{yadkori2024} provide coverage guarantees by
constructing abstention regions over the output space. All three approaches operate within the
probabilistic paradigm and therefore cannot represent hyper-truth states by construction.

\subsection{Non-Classical Logics and AI}

Paraconsistent logics---logics that tolerate inconsistency without trivializing---have a long history
in formal epistemology and have found applications in knowledge representation and reasoning under
contradiction~\cite{priest2006}. Belnap--Dunn four-valued logic~\cite{belnap1977} assigns to each
proposition a value in $\{\mathbf{t}, \mathbf{f}, \mathbf{b}, \mathbf{n}\}$ (true, false, both,
neither), providing explicit representations for overdetermination (\textbf{b}) and underdetermination
(\textbf{n}) that binary logic cannot express. Logic of Formal Inconsistency (LFI), developed by
Carnielli and colleagues~\cite{carnielli2007}, introduces a consistency operator that controls which
contradictions are tolerated.

Neutrosophic Logic~\cite{smarandache1998} differs from these in two respects. First, it introduces a
continuous third dimension (Indeterminacy) that captures a richer spectrum of uncertainty than discrete
four-valued systems. Second, it relaxes the normalization constraint entirely, allowing the
representation of simultaneously high truth, indeterminacy, and falsity---a generalization that
Belnap--Dunn logic cannot accommodate in its discrete form. While LFI and Belnap--Dunn logic are
important alternative foundations, this paper focuses on SVNS because its continuous structure is
directly interfaceable with LLM outputs expressed as real-valued triplets.

\subsection{Prompting Strategies and Elicitation of Epistemic States}

The role of prompting in shaping model behavior has been extensively studied~\cite{wei2022,kojima2022}.
Chain-of-thought prompting elicits step-by-step reasoning that often improves factual
accuracy~\cite{wei2022}. Role-based prompting assigns expert personas that modulate
output style~\cite{kong2023}. Self-ask and decomposition prompting break complex questions into
sub-questions that are individually more tractable.

A distinct line of work concerns the \emph{declared} versus \emph{revealed} uncertainty of LLMs: the
uncertainty a model reports when asked directly, as opposed to what can be inferred from sampling
distributions. Kadavath et al.~\cite{kadavath2022} show that LLMs can be calibrated to report
well-formed confidence scores when prompted appropriately. The present work extends this line by
asking whether the \emph{representational format} of the prompt---probabilistic versus
neutrosophic---systematically constrains or expands the epistemic states the model can declare.

\subsection{Plithogenic Extensions}

Smarandache~\cite{smarandache2018} introduced plithogenic sets as a generalization of neutrosophic
sets in which each element carries not a single triplet but a vector of triplets, one per attribute
value in a domain $V$. Mason~\cite{mason2026} argued that scalar neutrosophic evaluations collapse
important distinctions recoverable only by attribute-level structure, and proposed a tensor
representation that expands each phenomenon into a higher-dimensional epistemic object. The present
paper focuses on scalar SVNS and uses Proposition~2 (non-injectivity of $\pi$) to motivate the
plithogenic extension as a next step, which is pursued in a companion note responding directly to
Mason's tensor framework.

\section{Background and Methods}
\label{sec:background}

\subsection{Neutrosophic Logic: Formal Preliminaries}
\label{sec:formal}

We use the standard formulation of single-valued neutrosophic
logic~\cite{smarandache1998,smarandache2018}. We collect here the definitions and propositions that
the empirical sections will instantiate.

\begin{definition}[Single-Valued Neutrosophic Set, \cite{smarandache1998}]
Let $X$ be a universe of discourse. A single-valued neutrosophic set (SVNS) $A$ on $X$ is the set of
ordered quadruples
\begin{equation}
  A = \bigl\{\langle x,\, T_A(x),\, I_A(x),\, F_A(x)\rangle : x \in X\bigr\},
\end{equation}
where, for every element $x \in X$, the values $T_A(x)$, $I_A(x)$, and $F_A(x)$ denote, respectively,
the truth-membership degree, the indeterminacy-membership degree, and the falsity-membership degree of
$x$ in $A$. Each function maps $X$ to $[0,1]$, and no constraint is imposed on their sum, which
therefore lies in $[0,3]$.
\end{definition}

\begin{definition}[Neutrosophic Evaluation of a Statement]
Given a statement $s$ and an evaluator $E$, the neutrosophic evaluation of $s$ by $E$ is the ordered
triple
\begin{equation}
  n_E(s) = \bigl(T_E(s),\, I_E(s),\, F_E(s)\bigr) \in [0,1]^3,
\end{equation}
where $T_E(s)$, $I_E(s)$, and $F_E(s)$ denote, respectively, the truth degree, indeterminacy degree,
and falsity degree assigned by evaluator $E$ to statement $s$. When the evaluator is fixed throughout
the analysis, we write simply $n(s) = (T, I, F)$.
\end{definition}

\begin{definition}[Hyper-truth]
A neutrosophic evaluation $n(s) = (T, I, F) \in [0,1]^3$ is said to exhibit \emph{hyper-truth} if and
only if its three components satisfy $T + I + F > 1$. The hyper-truth region is the subset
\begin{equation}
  \mathcal{H} = \bigl\{(T, I, F) \in [0,1]^3 : T + I + F > 1\bigr\} \subset [0,1]^3,
\end{equation}
which collects every triple whose component-wise sum strictly exceeds unity.
\end{definition}

\begin{remark}
The hyper-truth region $\mathcal{H}$ has volume $\frac{1}{2}$ within the unit cube $[0,1]^3$, so
under a uniform prior on triplets, exactly half of all possible neutrosophic evaluations exhibit
hyper-truth. An empirical hyper-truth rate significantly above $0.5$ therefore signals a systematic
bias toward epistemic overdetermination for specific classes of stimuli; a rate significantly below
$0.5$ would signal the opposite.
\end{remark}

\begin{definition}[Strategy Mappings]
Each prompting strategy $S_k$ induces a mapping $S_k : \text{Statements} \to [0,1]^3$:
\begin{itemize}
  \item $S_1$ (neutrosophic): $S_1(s) = (T_1, I_1, F_1) \in [0,1]^3$, with no further constraint.
  \item $S_2$ (probabilistic): $S_2(s) = (T_2, I_2, F_2) \in [0,1]^3$ subject to $T_2 + I_2 + F_2 = 1$.
  \item $S_3$ (entropy-derived): $S_3(s) = (P_{\text{yes}}, H_3, P_{\text{no}})$ where
    $P_{\text{yes}} + P_{\text{no}} = 1$ and
    \begin{equation}
      H_3 = -\bigl[p \cdot \log_2(p) + (1-p) \cdot \log_2(1-p)\bigr],\quad p = P_{\text{yes}},
    \end{equation}
    in which the binary Shannon entropy $H_3$ is computed externally from the elicited probability
    of a yes-outcome.
\end{itemize}
\end{definition}

\begin{proposition}[Structural Exclusion of Hyper-truth under $S_2$]
\label{prop:exclusion}
Under Strategy~2, hyper-truth is structurally impossible: for every statement $s$, $S_2(s) \notin \mathcal{H}$.
\end{proposition}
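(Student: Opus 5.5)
The argument is a direct unpacking of the two definitions involved, with no estimates required. Fix an arbitrary statement $s$ and write $S_2(s) = (T_2, I_2, F_2)$. By the Strategy Mappings definition, this triple lies in $[0,1]^3$ and satisfies the normalization constraint $T_2 + I_2 + F_2 = 1$. By the Hyper-truth definition, membership in $\mathcal{H}$ requires both $(T_2,I_2,F_2)\in[0,1]^3$ and the strict inequality $T_2+I_2+F_2>1$. The plan is to show that the second condition fails identically.

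In order, the steps are:
\begin{enumerate}
  \item Substitute the constraint into the defining inequality of $\mathcal{H}$. This gives the condition $1 > 1$, which is false.
  \item Conclude $S_2(s)\notin\mathcal{H}$.
  \item Since $s$ was arbitrary, the conclusion holds for every statement.
\end{enumerate}

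The same argument yields a slightly stronger geometric fact, worth recording. The image of $S_2$ lies in the probability simplex $\Delta = \{(T,I,F)\in[0,1]^3 : T+I+F=1\}$. This set is exactly the boundary plane separating $\mathcal{H}$ from its complement, so $\Delta\cap\mathcal{H}=\emptyset$.

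There is no genuine mathematical obstacle; the result holds by construction. The only point needing care is interpretive. Empirically, a model prompted under Strategy~2 might return a triple that violates $T_2+I_2+F_2=1$, for instance through rounding or noncompliance. I would address this by noting that the proposition concerns the mapping $S_2$ as defined, so such outputs are either renormalized or discarded as invalid before entering the analysis.

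I would also add a remark that the exclusion is not robust to small perturbations of this kind. Any tolerance $T_2+I_2+F_2\le 1+\varepsilon$ with $\varepsilon>0$ would admit points of $\mathcal{H}$. It is therefore the exact equality constraint that delivers the structural suppression claimed in the abstract.
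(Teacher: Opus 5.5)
Your argument is correct and is essentially the paper's proof: Definition~4 imposes $T_2+I_2+F_2=1$, which contradicts the strict inequality $T+I+F>1$ defining $\mathcal{H}$. One caution on your side remark: the paper's exclusion criterion only checks that each value lies in $[0,1]$ and does not test the sum, so nothing in the paper says noncompliant $S_2$ outputs are renormalized or discarded. The proposition is a statement about the mapping as defined, not an empirical guarantee about the recorded data.
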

\begin{proof}
By Definition~4, $S_2(s)$ satisfies $T_2 + I_2 + F_2 = 1$, while membership in $\mathcal{H}$ requires
$T + I + F > 1$. The two conditions are mutually exclusive.
\end{proof}

The proposition explains why $S_2$ is the natural baseline: any non-zero hyper-truth rate observed
under $S_1$ is a representational gain that $S_2$ could not produce---a structural rather than empirical
contrast.

\begin{proposition}[Non-Injectivity of the Scalar Projection]
\label{prop:noninject}
Let $\pi : [0,1]^3 \to \mathbb{R}$ be the scalar projection $\pi(T,I,F) = T + I + F$. Then $\pi$ is
non-injective, hence the scalar sum is sufficient for hyper-truth detection but not for the
discrimination of distinct epistemic regimes.
\end{proposition}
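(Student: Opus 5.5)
The plan is to split the statement into its three claims: non-injectivity of $\pi$, sufficiency of $\pi$ for hyper-truth detection, and insufficiency of $\pi$ for separating epistemic regimes. Each reduces to an elementary observation about the map $\pi(T,I,F)=T+I+F$ on $[0,1]^3$.

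\textbf{Non-injectivity.} It is enough to show two distinct points of $[0,1]^3$ with the same image. I will use a level-set argument. For any $c\in(0,3)$, the fibre $\pi^{-1}(c)=\{(T,I,F)\in[0,1]^3: T+I+F=c\}$ is the intersection of a plane with the cube. This is a two-dimensional polygon containing infinitely many points. The fibres are degenerate only at the endpoints, where $\pi^{-1}(0)=\{(0,0,0)\}$ and $\pi^{-1}(3)=\{(1,1,1)\}$.

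For a concrete witness, I will take the paradox triple from the introduction, $(0.8,0.9,0.7)$, which has sum $2.4$. I will compare it with a permutation such as $(0.7,0.9,0.8)$, or with $(1,1,0.4)$. These are distinct points with the same image, which settles non-injectivity at once.

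\textbf{Sufficiency for hyper-truth detection.} By Definition~3, $(T,I,F)\in\mathcal{H}$ if and only if $\pi(T,I,F)>1$. So $\mathcal{H}=\pi^{-1}((1,3])$, and membership in $\mathcal{H}$ is a function of $\pi$ alone. This is exactly what "sufficient for detection" means.

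\textbf{Insufficiency for discriminating regimes.} This is the step that needs care, because "epistemic regime" is not formally defined in the paper. I would commit to an operational reading: two triples belong to different regimes when they differ in which component dominates. Under this reading, a paradox-like triple has high $T$ and high $F$, such as $(0.9,0.3,0.9)$. An ignorance-like triple has high $I$ with low $T$ and $F$, such as $(0.3,0.9,0.3)$ scaled to a matching sum. I would instead use exact pairs: $(0.9,0.2,0.9)$ and $(0.5,1,0.5)$, both with sum $2.0$ and both in $\mathcal{H}$.

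Any map that discriminates regimes must separate these two points, and any function of $\pi$ cannot separate them. Hence the scalar sum cannot discriminate regimes. More generally, every fibre $\pi^{-1}(c)$ with $1<c<3$ contains points whose argmax component differs, by permuting coordinates of a point with distinct entries. So the failure occurs at every hyper-truth level, not only at an isolated value. This is the main obstacle only in the definitional sense, and the proof will state the regime criterion explicitly before giving the witness pair.
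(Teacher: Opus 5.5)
Your proposal is correct. Its core matches the paper's proof, which consists entirely of the single collision $\pi(0.5,0.5,0.5)=\pi(0,1,0.5)=1.5$; that establishes non-injectivity and leaves the ``hence'' clause as an interpretive gloss.

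You go further in two ways. First, you actually prove the sufficiency half: $\mathcal{H}=\pi^{-1}((1,3])$, so membership in $\mathcal{H}$ is a function of $\pi$ alone. The paper never states this. Second, you observe that non-injectivity alone does not imply failure to discriminate epistemic regimes. If regimes were, say, the level sets of $\pi$, the sum would discriminate them perfectly. So you fix an operational criterion (the dominant component), exhibit a cross-regime collision inside $\mathcal{H}$, and show by permutation that such collisions occur in every fibre $\pi^{-1}(c)$ with $1<c<3$.

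This gives a rigorous reading of the second half of the statement and shows the failure is generic rather than isolated. The paper's version buys brevity and is all that the formal claim strictly needs. The paper's own pair also works under your criterion: $1.5>1$, and the dominant sets $\{T,I,F\}$ and $\{I\}$ differ.

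Two small tidy-ups. Your witness $(0.9,0.2,0.9)$ has a tie between $T$ and $F$. If ``dominant component'' is meant to be unique, use a pair with distinct entries, such as $(0.9,0.2,0.8)$ and $(0.2,0.9,0.8)$, which your permutation argument already supplies. Also drop the abandoned draft pair $(0.9,0.3,0.9)$, $(0.3,0.9,0.3)$; its sums, $2.1$ and $1.5$, do not match.
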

\begin{proof}
The triples $(0.5, 0.5, 0.5)$ and $(0, 1, 0.5)$ both yield $\pi = 1.5$ yet differ in their first
component.
\end{proof}

This proposition will reappear in Section~\ref{sec:discussion}: it motivates the plithogenic extension
of~\cite{smarandache2018}, which augments the scalar with attribute structure precisely to recover the
discriminations that $\pi$ collapses.

\begin{definition}[Hyper-truth Rate]
Let $D = \{n_i\}$, $i = 1,\ldots,N$, be a finite set of $N$ neutrosophic evaluations produced under
a fixed strategy. The hyper-truth rate of $D$ is the empirical proportion
\begin{equation}
  \rho(D) = \frac{1}{N} \sum_{i=1}^{N} \mathbbm{1}[T_i + I_i + F_i > 1],
\end{equation}
where the indicator function $\mathbbm{1}[\cdot]$ returns 1 when its argument is true and 0 otherwise.
\end{definition}

\begin{definition}[Strategy Shift]
For a component $C \in \{T, I, F\}$ and a phenomenon class $p$, the strategy shift between $S_1$ and
$S_2$ is
\begin{equation}
  \Delta_C(p) = \mathbb{E}\bigl[C^1(s) \mid s \in p\bigr] - \mathbb{E}\bigl[C^2(s) \mid s \in p\bigr],
\end{equation}
where $C^1(s)$ and $C^2(s)$ are the values of component $C$ produced by $S_1$ and $S_2$, respectively,
on statement $s$. A positive $\Delta_C$ indicates that the probabilistic constraint suppresses component
$C$ in that phenomenon class; a negative $\Delta_C$ indicates inflation.
\end{definition}

\begin{corollary}[Lower Bound on Representational Loss under $S_2$]
\label{cor:loss}
For any phenomenon class $p$ and any component $C$, the representational loss $|\Delta_C(p)|$ is
positive whenever the empirical distributions of $C^1$ and $C^2$ over $p$ differ. Because $S_2$
is structurally constrained to $T_2 + I_2 + F_2 = 1$, while $S_1$ is not, $\Delta_C$ captures the
fraction of the representational space of $S_1$ that $S_2$ systematically denies access to for
phenomenon class $p$.
\end{corollary}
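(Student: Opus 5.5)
The plan is to recast the corollary as a statement about the scalar projection $\pi$ of Proposition~\ref{prop:noninject}, and then read off a lower bound from Proposition~\ref{prop:exclusion}. The first step is pure bookkeeping. By Definition~6 and linearity of expectation, summing the three strategy shifts gives
\begin{equation}
  \Delta_T(p) + \Delta_I(p) + \Delta_F(p) = \mathbb{E}\bigl[\pi(S_1(s)) \mid s \in p\bigr] - \mathbb{E}\bigl[\pi(S_2(s)) \mid s \in p\bigr].
\end{equation}
Definition~4 imposes $\pi(S_2(s)) = 1$ for every $s$ (the same fact that drives Proposition~\ref{prop:exclusion}). So the right-hand side reduces to $\mathbb{E}[\pi(S_1(s)) \mid s\in p] - 1$. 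This identity is what makes ``representational loss'' quantitative.

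The second step derives the lower bound. Since the three shifts sum to $\mathbb{E}[\pi(S_1)\mid p]-1$, the triangle inequality gives $\max_C |\Delta_C(p)| \ge \frac{1}{3}\bigl|\mathbb{E}[\pi(S_1)\mid p]-1\bigr|$. I would then connect this to the hyper-truth rate of Definition~5. Split the expectation of $\pi(S_1)$ over the events $\pi>1$ and $\pi\le 1$. Whenever $\rho(D_p) > 0$ and the non-hyper-truth evaluations under $S_1$ do not compensate (for instance, if they satisfy $\pi \ge 1$), the mean of $\pi(S_1)$ strictly exceeds $1$. In that case at least one component has strictly positive $|\Delta_C(p)|$, with an explicit bound.

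The main obstacle is the literal claim that $|\Delta_C(p)|>0$ \emph{whenever the distributions differ}. As stated this does not follow, because $\Delta_C$ is a difference of means, and two different distributions can share a mean. Concretely, let $C^1$ take the values $0.2$ and $0.8$ with probability $1/2$ each, and let $C^2 \equiv 0.5$. The distributions differ, yet $\Delta_C = 0$.

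I would therefore prove the corrected forms. The strict positivity statement holds exactly when the conditional means differ; that is a tautology from Definition~6. The substantive content is the aggregate bound from the second step, which is the genuine ``lower bound'' on loss forced by the constraint $T_2+I_2+F_2=1$. If a distributional version is wanted, I would replace $|\Delta_C|$ by a distributional distance, such as Wasserstein-1 or total variation between the laws of $C^1$ and $C^2$; that quantity is positive precisely when the distributions differ.

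For the closing interpretive sentence about the ``fraction of the representational space'' denied to $S_2$, I would note a caveat. It holds only in the aggregate sense above: the excess mass $\mathbb{E}[\pi(S_1)]-1$ lives in $\mathcal{H}$, which $S_2$ cannot reach by Proposition~\ref{prop:exclusion}. An individual $\Delta_C$ need not measure it.
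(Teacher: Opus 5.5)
The paper gives no proof of this corollary. It is asserted directly after Definition~6 and serves only as an interpretive gloss on the strategy shifts. Your proposal therefore does not compete with an argument in the text; it supplies a missing one, and your diagnosis is correct.

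\textbf{The first sentence and your fix.} The first sentence of the corollary is false as written. Your counterexample is admissible under Definition~4: take $T_2\equiv 0.5$ with $I_2+F_2=0.5$, and put ten of twenty $S_1$ evaluations at $T_1=0.2$ and ten at $0.8$. The valid implication runs the other way: $|\Delta_C(p)|>0$ forces the empirical distributions to differ. Your repair via $W_1$ or total variation is the right one. Because $C$ takes values in $[0,1]$, both distances dominate $|\Delta_C(p)|$, so the distributional version extends the shift rather than replacing it with an unrelated quantity.

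\textbf{The aggregate identity.} The identity $\sum_C\Delta_C(p)=\mathbb{E}[\pi(S_1(s))\mid s\in p]-1$ is the real content behind the words ``lower bound.'' It is consistent with the paper's numbers. For ethical contradiction, the mean sum $1.605$ in Table~\ref{tab:phenom} gives $\max_C|\Delta_C(p)|\ge 0.605/3\approx 0.202$. This is met by $\Delta_T=0.267$ in Table~\ref{tab:shift}. If the $S_2$ responses honoured the constraint, the identity also recovers the unreported $\Delta_F=0.605-0.267-0.015=0.323$, which is the largest shift in that class.

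\textbf{Three refinements.}

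First, the detour through $\rho(D_p)$ is unnecessary. Your sample sufficient condition is also very restrictive, since non-hyper-truth evaluations with $\pi\ge 1$ must have $\pi=1$ exactly. All you need is $\mathbb{E}[\pi(S_1)\mid p]\neq 1$, which Table~\ref{tab:phenom} shows holds in all five classes.

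Second, the identity uses $\pi(S_2(s))=1$ exactly. Definition~4 imposes this by fiat, but the paper's exclusion criterion never checks it on the actual $S_2$ responses, so on the data it holds only up to compliance error. Separately, if you want the result to be an honest corollary of Proposition~\ref{prop:exclusion}, note that $\pi(S_2)\le 1$ alone already yields the one-sided bound $\sum_C\Delta_C(p)\ge \mathbb{E}[\pi(S_1)\mid p]-1$.

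Third, your caveat on the ``fraction of the representational space'' can be sharpened.
\begin{itemize}
\item Under $S_2$ each single coordinate still ranges over all of $[0,1]$ (take $(t,1-t,0)$). The constraint therefore denies no per-component values, only joint configurations.
\item A negative shift, such as $\Delta_T=-0.071$ for ignorance, means $S_2$ assigns \emph{more} of $C$ than $S_1$ does, which is the opposite of denial.
\end{itemize}
Only your aggregate reading of that sentence survives.
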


\subsection{Linguistic Phenomena}
\label{sec:phenomena}

We selected five distinct linguistic phenomena that span a representative range of epistemic challenge
types. The selection was motivated by the theoretical prediction that each phenomenon should produce a
distinct hyper-truth signature when evaluated under $S_1$:

\begin{itemize}
  \item \textbf{Logical Paradoxes:} statements that lead to self-contradiction (e.g., ``This sentence
    is false.''). Predicted signature: high $I$, non-trivial $T$ and $F$ simultaneously; very high
    hyper-truth rate.
  \item \textbf{Epistemic Ignorance:} statements whose truth value is unknown in principle (e.g.,
    ``The number of stars in the universe is even.''). Predicted signature: very high $I$, moderate $F$,
    low $T$; moderate hyper-truth rate.
  \item \textbf{Vagueness (Fuzzy Logic):} statements with imprecise boundaries (e.g., ``John is 1.75
    meters tall, therefore John is tall.''). Predicted signature: high $T$, moderate $I$; moderate
    hyper-truth rate.
  \item \textbf{Ethical Contradictions:} dilemmas where moral principles conflict (e.g., ``Lying to
    save an innocent life is morally right and wrong at the same time.''). Predicted signature: high $T$
    and high $F$ simultaneously, reflecting the genuine conflict; highest hyper-truth rate.
  \item \textbf{Future Contingencies:} statements about future events that are not yet determined
    (e.g., ``It will rain in New York tomorrow.'', with ``tomorrow'' anchored to 1 May 2026). Predicted
    signature: moderate $T$, high $I$, moderate $F$; high hyper-truth rate.
\end{itemize}

\subsection{Evaluation Strategies}

We employed three distinct prompting strategies, formalised in Definition~4 and reproduced verbatim
in Appendix~\ref{app:prompts}.

\begin{enumerate}
  \item \textbf{Strategy~1 (Neutrosophic):} the model evaluates the statement on three independent
    dimensions $T, I, F \in [0,1]$, explicitly stated as not constrained to sum to unity.
  \item \textbf{Strategy~2 (Probabilistic):} the model assigns probabilities to three mutually exclusive
    states (True, Uncertain, False) summing to 1.0.
  \item \textbf{Strategy~3 (Entropy-Derived):} the model estimates $P_{\text{yes}}$ and $P_{\text{no}}$
    summing to 1.0, from which we derive $I$ via Shannon binary entropy~\cite{shannon1948}.
\end{enumerate}

Strategy~1 and Strategy~2 use structurally isomorphic output formats---both request a JSON triplet
$(T, I, F)$---but differ in the normalization constraint communicated to the model. This design
isolates the effect of the constraint from any confound due to output format differences.
Strategy~3 provides an additional baseline in which indeterminacy is not elicited directly but derived
externally from binary probability judgments, allowing us to assess whether the entropy surrogate
approximates the neutrosophic indeterminacy.

\subsection{Models, Repetitions, and Reproducibility}

\textbf{Models and parameters.} The experiment involved four OpenAI models, accessed via the OpenAI
Chat Completions API on 30 April 2026: \texttt{gpt-4o}, \texttt{gpt-4-turbo}, \texttt{gpt-3.5-turbo},
and \texttt{gpt-4o-mini}. All calls used temperature $= 0.7$, default $top_p$, no fixed seed, and a
soft response-format constraint instructing the model to return only a JSON object. The full experiment
ran in approximately 5.6 minutes of wall-clock time.

\textbf{Design.} Each combination of model and phenomenon constituted one experimental cell ($4 \times 5
= 20$ cells per strategy). Five stochastic repetitions per cell yielded 100 evaluations per strategy
and 300 API calls in total. The five repetitions per cell are stochastic prompt-level replicates rather
than independent human-labeled items; we discuss this caveat in Section~\ref{sec:discussion}.

\textbf{Future-contingency anchoring.} All 25 future-contingency calls were made on 30 April 2026,
so ``tomorrow'' denotes 1 May 2026 throughout the dataset. Replications that wish to hold the stimulus
constant should use a fixed past date (e.g., ``It rained in New York on 1 May 2026'') to avoid
temporal confounds.

\textbf{Exclusion criteria.} A response was considered valid if it parsed as a well-formed JSON object
containing the required fields with each numeric value within $[0,1]$. All 300 calls returned valid
JSON; $N = 100$ per strategy is therefore both the gross and net sample size. The 100\% parse success
rate suggests that all four models reliably follow structured output instructions at temperature 0.7.

\textbf{Reproducibility.} All code, prompts, and raw data are openly released at
\url{https://github.com/mleyvaz/neutrosophic-llm-logic} under the MIT License.

\section{Results}
\label{sec:results}

\subsection{Descriptive Statistics}

Table~\ref{tab:phenom} reports descriptive statistics for the neutrosophic components (Strategy~1) by
phenomenon ($n = 20$ per row). Several patterns are immediately visible. First, the Indeterminacy
component dominates for Epistemic Ignorance ($I = 0.865$) and Logical Paradox ($I = 0.865$), consistent
with the theoretical prediction that these phenomena involve maximal unresolvability. Second, the Ethical
Contradiction phenomenon shows the highest mean Truth ($T = 0.605$) and the highest Falsity ($F = 0.470$)
simultaneously, producing the largest mean sum ($T+I+F = 1.605$). Third, Vagueness yields the most
compact distribution (lowest standard deviations across all three components), suggesting that fuzzy
vagueness elicits the most consistent epistemic assessment across models and repetitions.

\begin{table}[ht]
\centering
\caption{Descriptive statistics for neutrosophic components (Strategy~1) by phenomenon. Mean $\pm$ SD.}
\label{tab:phenom}
\begin{tabular}{lcccc}
\toprule
\textbf{Phenomenon} & \textbf{Truth ($T$)} & \textbf{Indeterminacy ($I$)} & \textbf{Falsity ($F$)} & \textbf{Sum ($T+I+F$)} \\
\midrule
Contingency (Future)     & $0.450 \pm 0.119$ & $0.475 \pm 0.129$ & $0.305 \pm 0.147$ & $1.230 \pm 0.166$ \\
Contradiction (Ethical)  & $0.605 \pm 0.110$ & $0.530 \pm 0.187$ & $0.470 \pm 0.113$ & $1.605 \pm 0.293$ \\
Ignorance (Epistemic)    & $0.160 \pm 0.216$ & $0.865 \pm 0.201$ & $0.280 \pm 0.324$ & $1.305 \pm 0.398$ \\
Paradox (Logical)        & $0.120 \pm 0.207$ & $0.865 \pm 0.230$ & $0.370 \pm 0.421$ & $1.355 \pm 0.429$ \\
Vagueness (Fuzzy)        & $0.562 \pm 0.118$ & $0.345 \pm 0.139$ & $0.242 \pm 0.127$ & $1.150 \pm 0.157$ \\
\bottomrule
\end{tabular}
\end{table}

Table~\ref{tab:model} reports per-model summaries across all phenomena. All four models produce mean
sums well above 1.0 under Strategy~1, with \texttt{gpt-4-turbo} achieving the highest mean sum
($1.360 \pm 0.319$) and \texttt{gpt-4o} the highest mean Indeterminacy ($0.720 \pm 0.248$). The
consistency across model families---from \texttt{gpt-3.5-turbo} to \texttt{gpt-4o}---suggests that
hyper-truth is not an artifact of any particular architectural variant but a systematic response to
the unconstrained evaluation protocol.

\begin{table}[ht]
\centering
\caption{Per-model summary across all five phenomena (Strategy~1). Mean $\pm$ SD.}
\label{tab:model}
\begin{tabular}{lcccc}
\toprule
\textbf{Model} & \textbf{Truth ($T$)} & \textbf{Indeterminacy ($I$)} & \textbf{Falsity ($F$)} & \textbf{Sum ($T+I+F$)} \\
\midrule
\texttt{gpt-3.5-turbo} & $0.374 \pm 0.183$ & $0.576 \pm 0.183$ & $0.354 \pm 0.179$ & $1.304 \pm 0.203$ \\
\texttt{gpt-4-turbo}   & $0.448 \pm 0.254$ & $0.628 \pm 0.253$ & $0.284 \pm 0.206$ & $1.360 \pm 0.319$ \\
\texttt{gpt-4o}        & $0.332 \pm 0.272$ & $0.720 \pm 0.248$ & $0.260 \pm 0.214$ & $1.312 \pm 0.373$ \\
\texttt{gpt-4o-mini}   & $0.364 \pm 0.307$ & $0.540 \pm 0.373$ & $0.436 \pm 0.387$ & $1.340 \pm 0.442$ \\
\bottomrule
\end{tabular}
\end{table}

\subsection{Distribution of Neutrosophic Components}

Figure~\ref{fig:components} shows the distribution of the neutrosophic components ($T$, $I$, $F$) for
each linguistic phenomenon under Strategy~1. The contrast between phenomena is most visible in the
Indeterminacy panel: Epistemic Ignorance and Logical Paradox cluster near $I \approx 0.9$, while
Vagueness clusters near $I \approx 0.3$. The Truth panel reveals the opposite pattern for Ethical
Contradiction (high $T$) versus Logical Paradox (low $T$), reflecting the qualitatively different
structure of these two types of conflict.

\begin{figure}[ht]
  \centering
  \includegraphics[width=\linewidth]{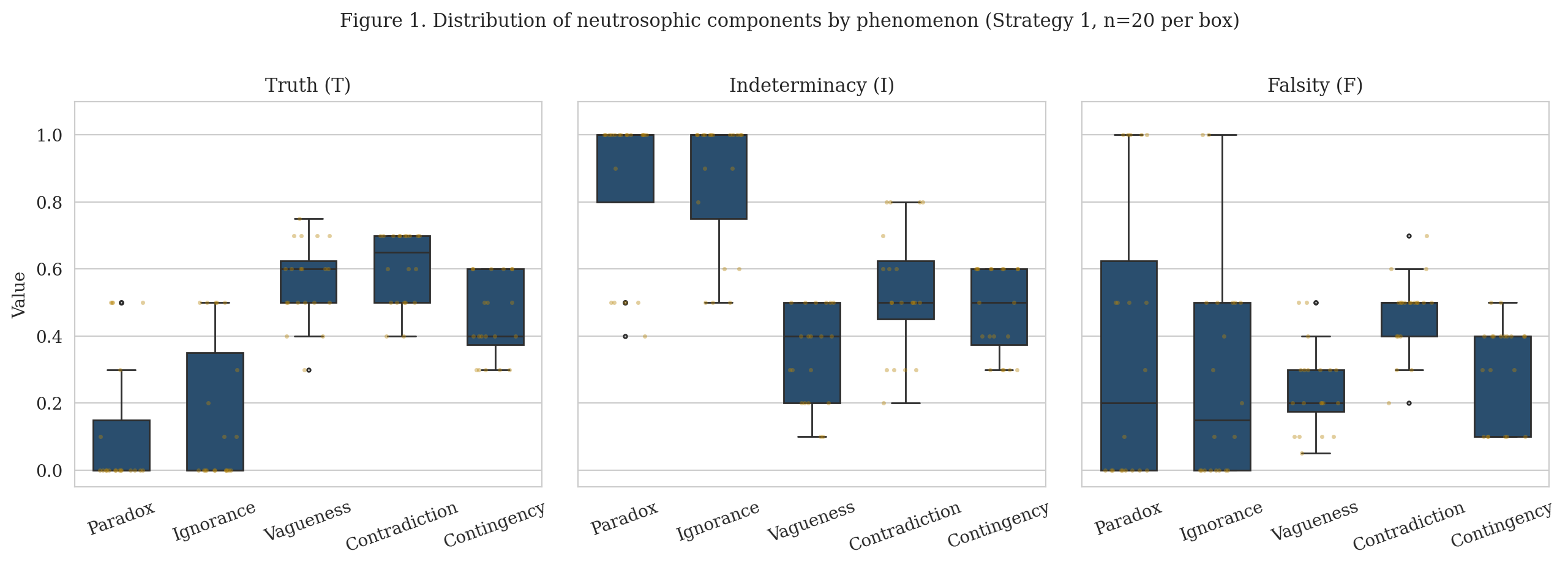}
  \caption{Distribution of the neutrosophic components for each linguistic phenomenon under
    Strategy~1 ($n = 20$ per box).}
  \label{fig:components}
\end{figure}

\subsection{Hyper-truth: Breaking the Probabilistic Constraint}

Across the $N = 100$ valid Strategy-1 evaluations, the empirical hyper-truth rate (Definition~5) is
\[
  \hat{\rho}(D_{S_1}) = 66/100 = 0.660.
\]
The 95\% Wilson score confidence interval for a binomial proportion with $k = 66$ successes in $N = 100$ is
\[
  \mathrm{CI}_{95\%}(\hat{\rho}) = [0.563,\; 0.747], \quad z = 1.96.
\]
The lower bound 0.563 already exceeds any reasonable null hypothesis of zero hyper-truth, and the entire
interval is well above the structural bound $\rho(D_{S_2}) = 0$ implied by Proposition~\ref{prop:exclusion}.
The phenomenon is concentrated in ethical contradiction and future contingency, as Table~\ref{tab:hypertruth}
shows. Notably, the observed rate of 0.660 is above the uniform prior reference point of 0.500
(Remark~1), indicating a systematic preference for overdetermined epistemic states across all phenomena.

\textbf{Test of phenomenon $\times$ hyper-truth association.} A Pearson chi-square test of independence
between phenomenon class and hyper-truth status ($5 \times 2$ contingency table) yields $\chi^2 = 11.32$
with $df = 4$ and $p = 0.023$, allowing rejection of independence at $\alpha = 0.05$. One-vs-rest
Fisher exact tests identify ethical contradiction as the only phenomenon whose hyper-truth rate is
significantly higher than the rest of the dataset (odds ratio $= 13.34$, $p = 0.0014$); the remaining
four phenomena are not individually distinguishable from the pooled baseline at $\alpha = 0.05$. The
chi-square result confirms that hyper-truth incidence is heterogeneous across phenomena and that ethical
contradiction is the principal driver of that heterogeneity.

\begin{table}[ht]
\centering
\caption{Hyper-truth rate by phenomenon. $k$ denotes evaluations with $T + I + F > 1$; $n = 20$ per phenomenon.}
\label{tab:hypertruth}
\begin{tabular}{lccc}
\toprule
\textbf{Phenomenon} & \textbf{Hyper-truth cases ($k$)} & \textbf{Total ($n$)} & \textbf{Rate ($k/n$)} \\
\midrule
Contingency (Future)    & 14 & 20 & 70.0\% \\
Contradiction (Ethical) & 19 & 20 & 95.0\% \\
Ignorance (Epistemic)   & 11 & 20 & 55.0\% \\
Paradox (Logical)       & 10 & 20 & 50.0\% \\
Vagueness (Fuzzy)       & 12 & 20 & 60.0\% \\
\midrule
\textbf{Total}          & \textbf{66} & \textbf{100} & \textbf{66.0\%} \\
\bottomrule
\end{tabular}
\end{table}

Figure~\ref{fig:hypertruth} shows the distribution of $T+I+F$ under Strategy~1 by phenomenon. All
five phenomena have median sums above 1.0, and the spread is largest for Logical Paradox (SD = 0.429)
and Ethical Contradiction (SD = 0.293), reflecting the inherent variability in how models resolve
maximally conflicted stimuli.

\begin{figure}[ht]
  \centering
  \includegraphics[width=\linewidth]{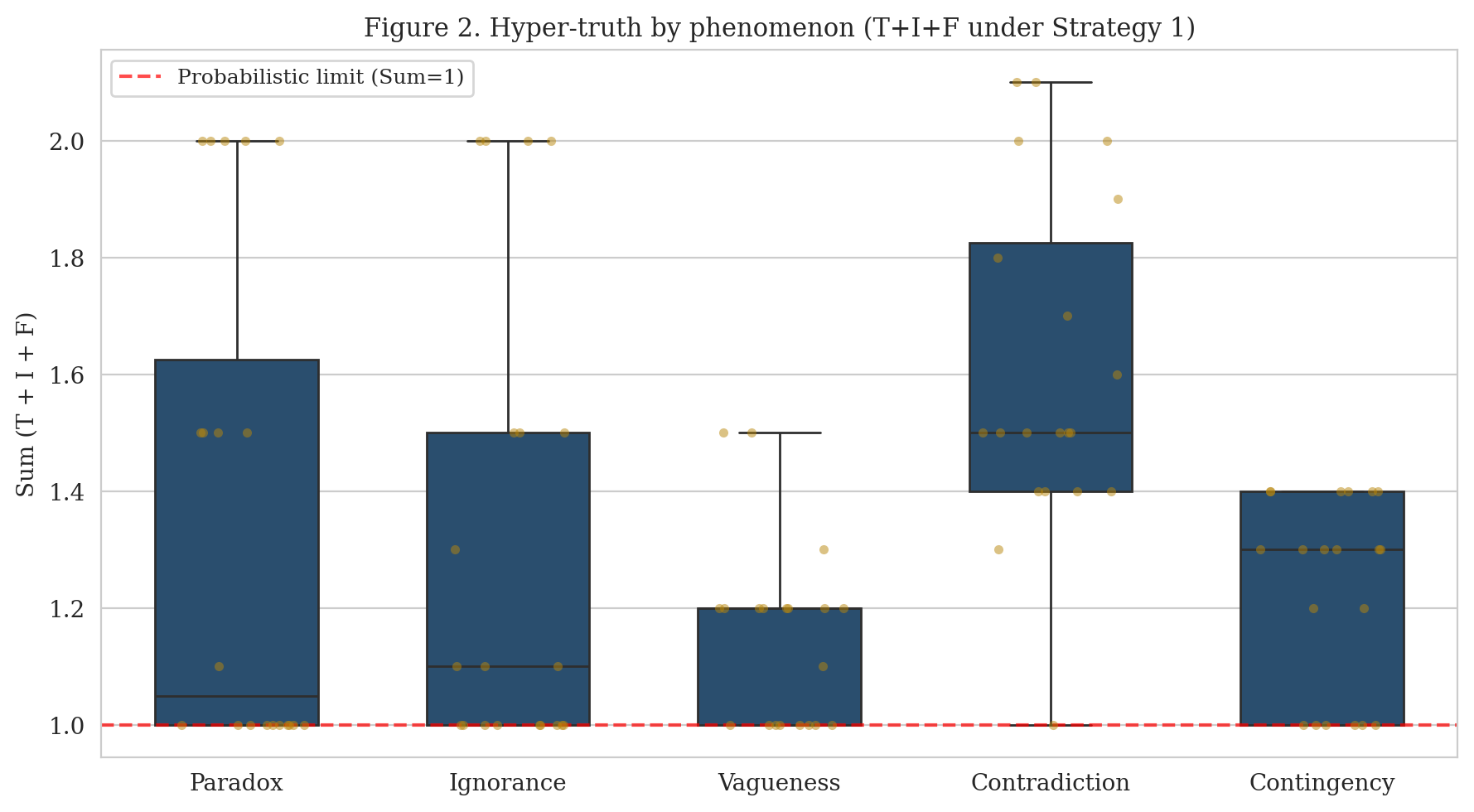}
  \caption{Distribution of $T+I+F$ under Strategy~1 by phenomenon. The dashed horizontal line marks
    the probabilistic constraint ($\mathrm{Sum} = 1$).}
  \label{fig:hypertruth}
\end{figure}

\subsection{Comparison of Neutrosophic and Probabilistic Strategies}

Table~\ref{tab:shift} reports the strategy shifts $\Delta_T$ and $\Delta_I$ (Definition~6) between
Strategy~1 (neutrosophic) and Strategy~2 (probabilistic). The largest absolute strategy shifts are
observed for ethical contradiction in the truth component ($\Delta_T = +0.267$) and for epistemic
ignorance in the indeterminacy component ($\Delta_I = +0.383$). Both are positive, indicating that
the probabilistic constraint of Strategy~2 suppresses precisely the components that Strategy~1 allows
the model to communicate.

The $\Delta_T = -0.071$ for Epistemic Ignorance indicates mild inflation of the truth component under
$S_2$ for this phenomenon---a counterintuitive result suggesting that when models are forced to
normalize, some probability mass that $S_1$ allocates to Indeterminacy migrates to Truth. This is
consistent with the hypothesis that probabilistic normalization creates an implicit pressure toward
confident assertions even in cases of genuine ignorance.

\begin{table}[ht]
\centering
\caption{Strategy shifts $\Delta_T$ and $\Delta_I$ per phenomenon.}
\label{tab:shift}
\begin{tabular}{lcccccc}
\toprule
\textbf{Phenomenon} & \textbf{$S_1\,T$} & \textbf{$S_2\,T$} & \textbf{$\Delta T$} & \textbf{$S_1\,I$} & \textbf{$S_2\,I$} & \textbf{$\Delta I$} \\
\midrule
Contingency (Future)    & 0.450 & 0.355 & $+0.095$ & 0.475 & 0.470 & $+0.005$ \\
Contradiction (Ethical) & 0.605 & 0.338 & $+0.267$ & 0.530 & 0.515 & $+0.015$ \\
Ignorance (Epistemic)   & 0.160 & 0.231 & $-0.071$ & 0.865 & 0.482 & $+0.383$ \\
Paradox (Logical)       & 0.120 & 0.000 & $+0.120$ & 0.865 & 0.900 & $-0.035$ \\
Vagueness (Fuzzy)       & 0.562 & 0.450 & $+0.112$ & 0.345 & 0.305 & $+0.040$ \\
\bottomrule
\end{tabular}
\end{table}

Figure~\ref{fig:s1s2} illustrates the strategy comparison. The indeterminacy panel reveals the most
dramatic effect: under $S_2$, the mean $I$ for Epistemic Ignorance drops from 0.865 to 0.482---a
suppression of nearly 44 percentage points---while the probabilistic normalization forces compensatory
increases in $T$ and $F$.

\begin{figure}[ht]
  \centering
  \includegraphics[width=\linewidth]{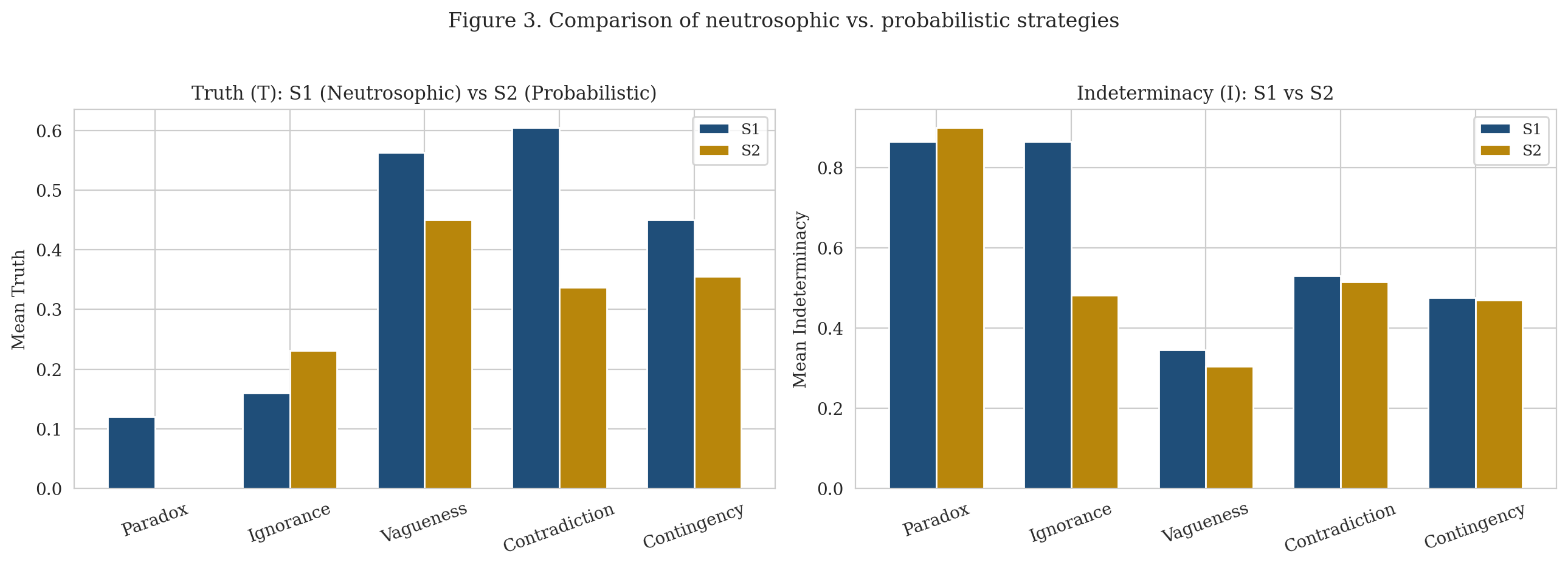}
  \caption{Comparison of mean Truth ($T$) and Indeterminacy ($I$) values between Strategy~1 and Strategy~2.}
  \label{fig:s1s2}
\end{figure}

\subsection{Per-Model Analysis}

Figure~\ref{fig:models} shows the per-model distribution of $T+I+F$ under Strategy~1. All four models
produce distributions centered above 1.0, with \texttt{gpt-4-turbo} and \texttt{gpt-4o-mini} showing
heavier upper tails (maximum observed sums approaching 2.0). The inter-model variance in the sum
distributions is relatively low---all four models are behaviourally similar under the unconstrained
protocol---consistent with Mason's~\cite{mason2026} finding of cross-vendor generality.

\begin{figure}[ht]
  \centering
  \includegraphics[width=\linewidth]{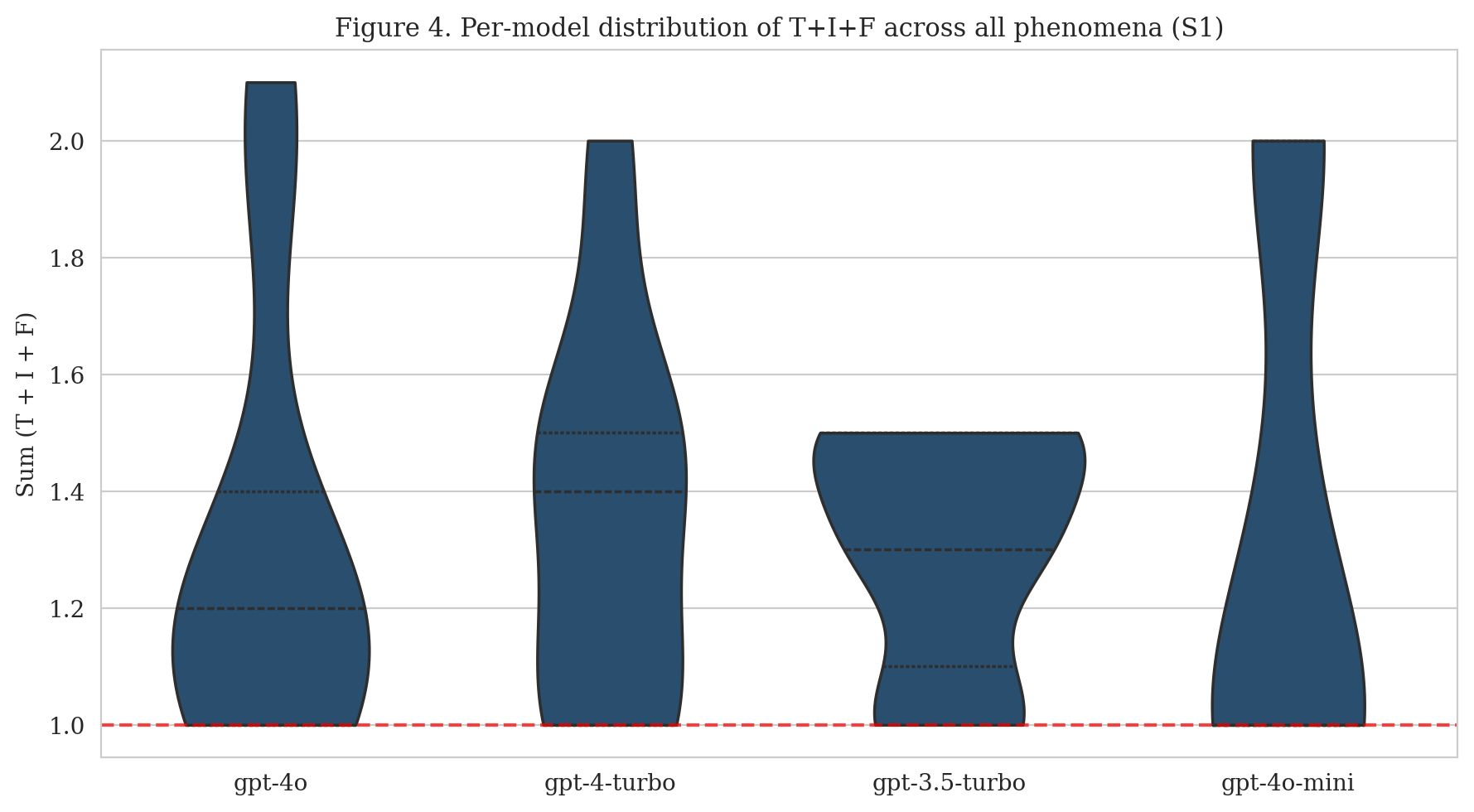}
  \caption{Per-model distribution of $T+I+F$ (Strategy~1).}
  \label{fig:models}
\end{figure}

\subsection{Correlation Analysis}

Figure~\ref{fig:corr} shows the correlation matrix among Strategy~1 and Strategy~2 components. The
strong negative correlation between $S_1$ Truth and $S_1$ Indeterminacy ($r = -0.82$) reflects a
semantic tension: statements that models judge as highly true tend to receive low indeterminacy scores
even under the unconstrained protocol. The high positive correlation between $S_1$ Falsity and
$S_1$ Sum ($r = 0.89$) suggests that the Falsity component is the dominant driver of hyper-truth: when
models assign high falsity under $S_1$, the sum exceeds 1 because the other two components are not
correspondingly suppressed.

Across strategies, $S_1$ Truth and $S_2$ Truth show a moderate positive correlation ($r = 0.64$),
confirming that the two strategies rank statements similarly in terms of perceived truth---they differ
in the \emph{amount} of truth mass allocated, not in the ordinal ranking. The near-zero correlation
between $S_1$ Falsity and $S_2$ Falsity ($r = 0.01$) is more striking: the Falsity component under
$S_2$ appears to be largely determined by the normalization constraint rather than by the content of
the statement.

\begin{figure}[ht]
  \centering
  \includegraphics[width=0.75\linewidth]{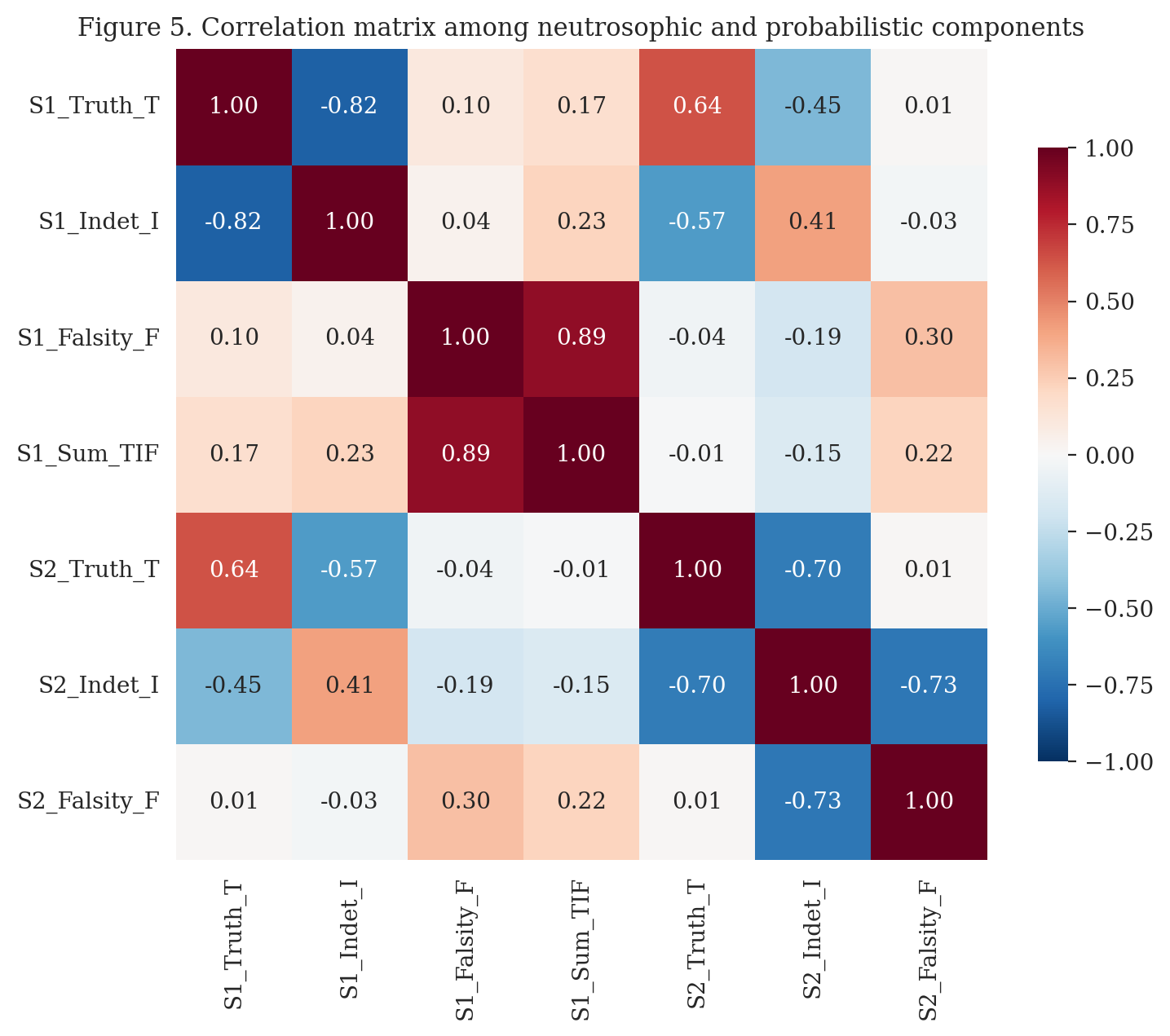}
  \caption{Correlation matrix among Strategy~1 and Strategy~2 components.}
  \label{fig:corr}
\end{figure}

\subsection{Critical Case: Ethical Contradiction}

Figure~\ref{fig:ethical} shows the per-model neutrosophic components for the ethical contradiction
stimulus (``Lying to save an innocent life is morally right and wrong at the same time''). This
stimulus elicited the highest hyper-truth rate (95\%), with 19 of 20 evaluations producing $T+I+F > 1$.
The scatter plot reveals that all four models tend toward high Truth ($T \in [0.6, 1.0]$) and high
Falsity ($F \in [0.4, 0.7]$) simultaneously---the classic signature of genuine moral contradiction.
The varying Indeterminacy scores (encoded as point size in Figure~\ref{fig:ethical}) reflect model-level
differences in how much residual uncertainty each architecture assigns on top of the conflicting truth
and falsity values.

This pattern is precisely what neutrosophic logic predicts for a genuine dilemma: both moral propositions
(``lying is right'' and ``lying is wrong'') receive non-trivial truth values simultaneously. A
probabilistic encoding would force one to dominate, artificially resolving a conflict that has no
principled resolution.

\begin{figure}[ht]
  \centering
  \includegraphics[width=0.75\linewidth]{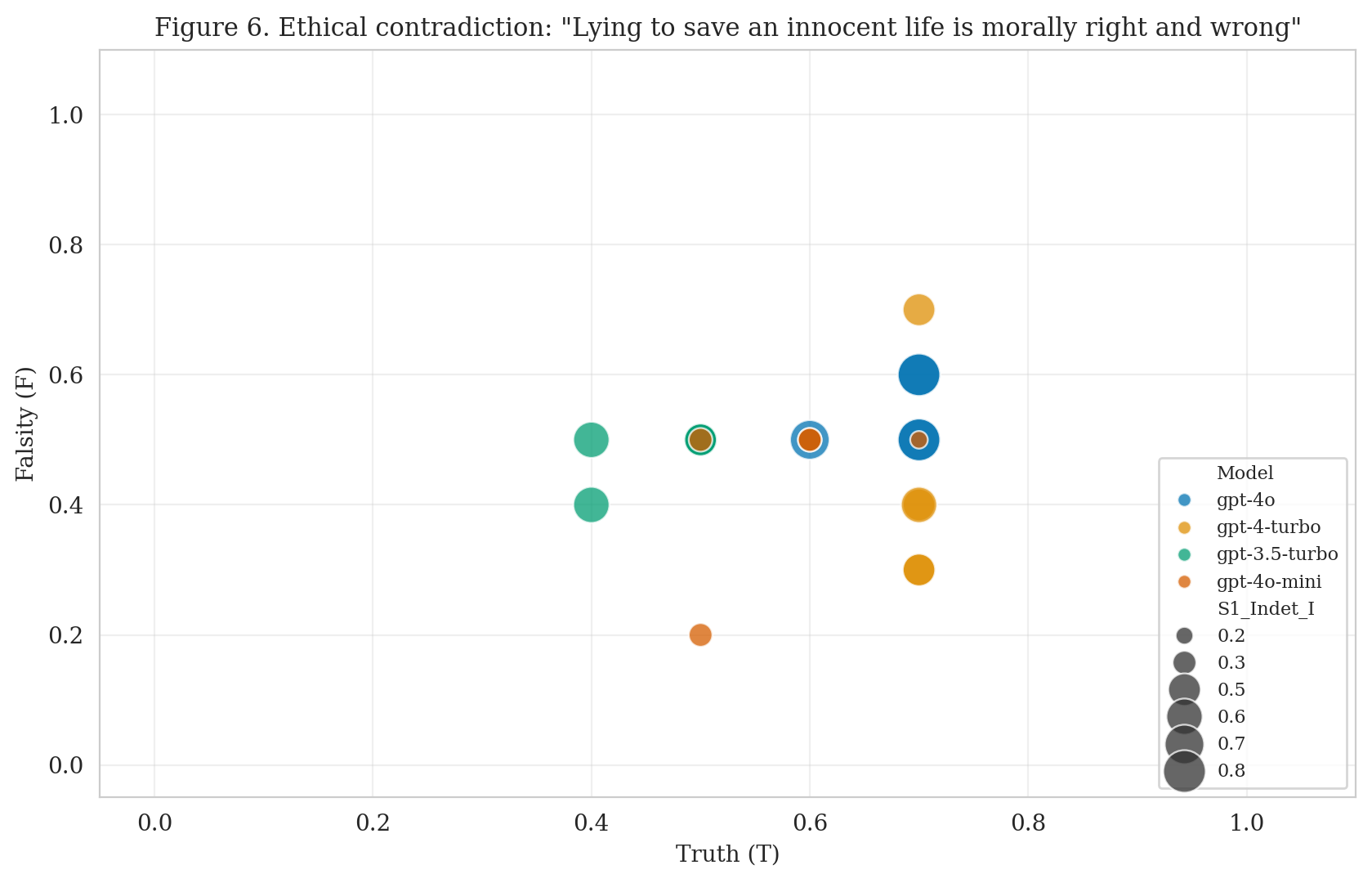}
  \caption{Per-model neutrosophic components for the ethical contradiction stimulus. Point size
    encodes Indeterminacy ($I$). All four models cluster in the high-$T$, high-$F$ region,
    producing hyper-truth in 95\% of evaluations.}
  \label{fig:ethical}
\end{figure}

\section{Discussion}
\label{sec:discussion}

\subsection{Framing the Central Claim}

Our results are consistent with the hypothesis stated in Section~1: under unconstrained neutrosophic
prompting, current LLMs declare hyper-truth at a non-trivial rate (66.0\%), with the highest rate
occurring for ethical contradiction (95\%) and the chi-square test rejecting independence between
phenomenon and hyper-truth at $\alpha = 0.05$.

We do not claim that hyper-truth is an intrinsic latent variable directly observed inside the model.
Strategy~1 explicitly affords the model the option of returning three independent components on $[0,1]$;
the resulting frequency of hyper-truth is therefore a \emph{representational affordance} finding, not
a latent-variable measurement. The contribution is correspondingly framed as: unconstrained neutrosophic
prompting elicits a class of declared epistemic states that probabilistic prompting cannot represent by
construction (Proposition~\ref{prop:exclusion}). This is structural rather than empirical
superiority---Strategy~2 is excluded from the hyper-truth region by construction, so any non-zero rate
under Strategy~1 is a representational gain that Strategy~2 could not produce.

\subsection{Relationship to Existing UQ Frameworks}

Semantic entropy~\cite{kuhn2023} estimates indeterminacy from the distribution of paraphrases of the
model output; it remains a probabilistic measure and therefore cannot represent hyper-truth.
SelfCheckGPT~\cite{manakul2023} performs consistency checks across stochastic samples and reports a
binary or scalar consistency score, which collapses the conflict-versus-ignorance distinction we recover
with the ($T$, $I$) pair. Conformal abstention~\cite{yadkori2024} addresses \emph{when} a model should
refuse to answer; it does not describe the structure of the uncertainty when the model does answer.
The neutrosophic framework is complementary to these approaches: it provides a richer descriptive
language for the epistemic state, on top of which calibration and abstention policies can still operate.

A particularly relevant comparison is with the Belnap--Dunn four-valued logic framework~\cite{belnap1977},
in which statements are classified as True, False, Both (true and false), or None (neither). The ``Both''
value corresponds to a discrete version of our hyper-truth concept. Our framework extends Belnap--Dunn
in two directions: (i) by making the three dimensions continuous rather than discrete, enabling graded
conflict and graded ignorance; and (ii) by providing an empirical methodology for eliciting these
graded values from LLMs via structured prompting. To our knowledge, this is the first work to connect
Belnap--Dunn-style overdetermination with LLM epistemic auditing through a continuous prompting
protocol.

\subsection{Implications for AI Safety and Alignment}

The hyper-truth phenomenon has direct implications for AI safety and alignment research. Contemporary
alignment approaches typically ask models to output a single most-likely answer or confidence
score~\cite{gabriel2020}. This works well for factual queries where a ground truth exists, but
is epistemically inappropriate for genuine moral dilemmas, paradoxes, and vague predicates where
forcing a point estimate misrepresents the structure of the problem.

Our results suggest that LLMs already ``know'' about the conflicted nature of ethical dilemmas in the
sense that, when given the representational freedom to express it, they assign simultaneously high
truth and high falsity values to contradictory moral propositions. The 95\% hyper-truth rate for ethical
contradictions implies that the collapse of epistemic conflict is an artifact of the probabilistic
output format, not of the model's internal representations.

This has a practical implication: AI systems deployed in ethical decision-support contexts might be
systematically underreporting their own uncertainty by virtue of the normalization constraint imposed
at the output layer. Replacing or augmenting the Softmax layer with an unconstrained neutrosophic
output head could allow these systems to signal genuine moral conflict rather than masking it behind
a normalized probability distribution.

\subsection{The Plithogenic Extension and Mason's Challenge}

The non-injectivity of the scalar projection $\pi$ (Proposition~\ref{prop:noninject}) motivates a
further extension. The plithogenic neutrosophic structure of Smarandache~\cite{smarandache2018} is
the 5-tuple $\mathcal{P} = (P, v, V, d, c)$, where $P$ is a set of plithogenic elements, $v$ is the
dominant attribute, $V = \{v_1, \ldots, v_k\}$ is the spectrum of attribute values, $d : P \times V
\to [0,1]^3$ is the per-attribute neutrosophic membership, and $c : V \times V \to [0,1]$ is the
contradiction function with $c(v,v) = 0$ and $c(v_i, v_j) = c(v_j, v_i)$. The scalar evaluation of
Definition~2 is recovered as the marginal of $d$ aggregated over $V$.

Mason~\cite{mason2026} challenged scalar neutrosophic evaluations on the grounds that two evaluations
with the same scalar projection $\pi$ but different attribute spectra are indistinguishable under
Definition~5, even though they may represent qualitatively different epistemic states. Distinct
evaluations with the same $\pi(d)$ but disjoint attribute spectra $V_1 \cap V_2 = \emptyset$ become
formally non-isomorphic plithogenic objects, recovering the discriminations the scalar collapses. We
pursue this connection in a companion note that responds directly to Mason's tensor
framework~\cite{mason2026}.

\subsection{Limitations}

We acknowledge four constraints on the present claims.

\begin{enumerate}
  \item \textbf{Representational affordance vs.\ latent measurement.} The hyper-truth observation is
    partly a representational affordance of the unconstrained prompt. Strategy~1 explicitly invites
    the model to report three independent values; the resulting sums therefore reflect both the model's
    epistemic state and the model's compliance with the unconstrained format. Disentangling these two
    contributions requires probing experiments with varied prompts that are beyond the scope of
    the present study.

  \item \textbf{Effective vs.\ independent sample size.} The five repetitions per cell are stochastic
    prompt-level replicates rather than independent human-labeled items. The $N = 100$ reported is
    therefore an effective sample size at the cell $\times$ repetition level, not at the level of
    independently sampled stimuli. The Wilson interval and chi-square test should be read accordingly.
    Future work should expand the phenomenon bank and use independent stimulus sampling to obtain
    a true $N$ for inferential purposes.

  \item \textbf{Small phenomenon set.} The five phenomena form a limited probe set. The framework
    requires calibration of how the components relate to ground truth in downstream tasks, and the
    representativeness of the five stimuli for their respective classes has not been established.

  \item \textbf{Temporal anchoring.} The future-contingency stimulus is anchored to a specific date
    (1 May 2026), so its referential content is fixed only for replications that hold the date
    constant. Replications using this stimulus should fix the date explicitly.
\end{enumerate}

\subsection{Future Work}

Several extensions of the present framework are natural next steps.

\begin{itemize}
  \item \textbf{Multi-vendor empirical study.} Mason~\cite{mason2026} has partially addressed
    cross-vendor generality (84\% hyper-truth across five vendors), but a systematic comparison
    controlling for model size, instruction-tuning data, and RLHF procedure is still lacking.
    Our v3 study extends the present design to six vendor families with 2,500 API calls.

  \item \textbf{Plithogenic tensor scoring.} Replacing the scalar SVNS score with a full
    $(P, v, V, d, c)$ structure would allow attribute-level decomposition of the epistemic state,
    addressing Mason's non-injectivity critique directly.

  \item \textbf{Downstream task validation.} The link between declared hyper-truth and actual task
    performance on conflict-sensitive benchmarks (trolley-problem reasoning, legal dilemma resolution,
    paradox-tolerance tasks) has not been established. Such validation would determine whether the
    representational affordance translates into actionable improvements in epistemic reliability.

  \item \textbf{Integration with alignment pipelines.} The neutrosophic output head could be
    implemented as a post-Softmax layer that allows the model to report three independent scores
    rather than a normalized distribution. Evaluating whether this modification improves value
    alignment in Constitutional AI~\cite{bai2022} and similar frameworks is an open problem.
\end{itemize}

\section{Conclusions}
\label{sec:conclusions}

We have presented an empirical investigation of neutrosophic logic applied to declared epistemic
uncertainty in large language models, framed within a formal SVNS apparatus comprising six definitions,
two propositions, and one corollary. The unconstrained $T / I / F$ protocol elicits hyper-truth in
66.0\% of evaluations across the four-model ensemble, with Wilson 95\% confidence interval
$[0.563, 0.747]$.

The highest rates were observed in ethical contradictions (95\%) and future contingencies (70\%),
followed by vagueness (60\%), epistemic ignorance (55\%), and logical paradox (50\%); only ethical
contradiction is significantly above the pooled baseline at $\alpha = 0.05$ (OR $= 13.34$,
$p = 0.0014$). The strategy-shift analysis reveals that the probabilistic normalization constraint
suppresses indeterminacy by up to 38 percentage points (epistemic ignorance, $\Delta_I = +0.383$) and
simultaneously inflates or suppresses truth values in ways that misrepresent the model's actual
epistemic state.

Mason~\cite{mason2026} has independently confirmed cross-vendor generality of the phenomenon at 84\%
across five additional vendors, ruling out the possibility that hyper-truth is an OpenAI-specific
artifact. Together, the two studies establish hyper-truth as a robust, cross-vendor, cross-phenomenon
property of current LLMs when evaluated under unconstrained neutrosophic prompting.

The practical implication is straightforward: AI systems deployed in high-stakes domains that require
the representation of genuine epistemic conflict---ethical dilemmas, paradoxical evidence, legally
ambiguous situations---should not be forced to report a single normalized probability distribution.
The neutrosophic evaluation protocol developed here provides a drop-in alternative that preserves
the full epistemic structure of the model's declared state at the cost of a modest change in the
output format.

The next steps in this line of work are: (i) extension to plithogenic neutrosophic structures with
explicit attribute decomposition $(P, v, V, d, c)$, pursued in a companion note responding to
Mason~\cite{mason2026}; (ii) expansion to a larger, independently-sampled phenomenon bank; and
(iii) integration of neutrosophic evaluation layers into agentic AI pipelines for high-stakes domains,
including alignment frameworks that must handle genuine value conflict.

\section*{Acknowledgments}

The authors thank Tony Mason (University of British Columbia and Georgia Institute of Technology) for
the open release of his data and code, which has stimulated the present line of research toward a
richer plithogenic foundation.

\section*{Funding}
This research received no external funding.

\section*{Conflicts of Interest}
The authors declare no conflict of interest.

\section*{Data Availability}
All code, prompts, and raw experimental data are openly available at
\url{https://github.com/mleyvaz/neutrosophic-llm-logic} under the MIT license, and have been
permanently archived in Zenodo as version v2.0 with DOI
\href{https://doi.org/10.5281/zenodo.19911845}{10.5281/zenodo.19911845}.

\appendix
\section{Prompt Strategies}
\label{app:prompts}

We reproduce here the exact system and user prompts for the three strategies, as committed to the
public repository. These prompts are the sole difference between the three experimental conditions;
all other parameters (model, temperature, API endpoint) were held constant.

\subsection*{A.1\quad Strategy~1 (Neutrosophic)}

\noindent\textbf{System:} ``You are an expert in Neutrosophic Logic. You evaluate statements using
three INDEPENDENT dimensions: Truth (T), Indeterminacy (I), and Falsity (F), each on [0.0, 1.0].
These dimensions are NOT constrained to sum to 1.0. A statement can be simultaneously partially true
AND partially false AND partially indeterminate. Respond with ONLY a JSON object, no other text.''

\medskip
\noindent\textbf{User:} ``Evaluate this statement on three independent dimensions: Statement:
\texttt{\{statement\}} --- Truth (T): To what degree is this statement true? [0.0 to 1.0];
Indeterminacy (I): To what degree is the truth value unknown, undetermined, or inherently uncertain?
[0.0 to 1.0]; Falsity (F): To what degree is this statement false? [0.0 to 1.0]. T, I, and F are
independent. They need NOT sum to 1.0. Respond with ONLY:
\texttt{\{"T": <value>, "I": <value>, "F": <value>\}}.''

\subsection*{A.2\quad Strategy~2 (Probabilistic)}

\noindent\textbf{System:} ``You are a probabilistic classifier. You assign probabilities to three
mutually exclusive categories that MUST sum to exactly 1.0. Respond with ONLY a JSON object, no
other text.''

\medskip
\noindent\textbf{User:} ``Classify this statement into three mutually exclusive categories whose
probabilities sum to 1.0: Statement: \texttt{\{statement\}} --- T (True): Probability the statement
is true; I (Uncertain): Probability the truth value is unknown or undetermined; F (False): Probability
the statement is false. CONSTRAINT: T + I + F must equal 1.0. Respond with ONLY:
\texttt{\{"T": <value>, "I": <value>, "F": <value>\}}.''

\subsection*{A.3\quad Strategy~3 (Entropy-Derived)}

\noindent\textbf{System:} ``You are a binary truth estimator. You estimate the probability that a
statement is true (YES) versus false (NO). The two probabilities must sum to 1.0. Respond with ONLY
a JSON object, no other text.''

\medskip
\noindent\textbf{User:} ``Estimate the probability that this statement is true versus false:
Statement: \texttt{\{statement\}} --- P\_yes: Probability the statement is true, in the closed
interval [0.0, 1.0]; P\_no: Probability the statement is false, in the closed interval [0.0, 1.0].
CONSTRAINT: P\_yes + P\_no must equal 1.0. Respond with ONLY:
\texttt{\{"P\_yes": <value>, "P\_no": <value>\}}.''

\medskip
\noindent\textbf{Post-processing.} Indeterminacy is then derived externally from the Shannon binary
entropy of the elicited distribution:
\[
  I = -\bigl[p \cdot \log_2(p) + (1-p) \cdot \log_2(1-p)\bigr], \quad \text{where } p = P_{\text{yes}}.
\]
This yields a derived triple $(T, I, F) = (P_{\text{yes}},\, I,\, P_{\text{no}})$ which can then be
compared against Strategies~1 and~2 within a single notational frame.

\bibliographystyle{plainnat}
\bibliography{references}

@article{brown2020,
  author    = {Tom B. Brown and Benjamin Mann and Nick Ryder and others},
  title     = {Language models are few-shot learners},
  journal   = {Advances in Neural Information Processing Systems},
  volume    = {33},
  pages     = {1877--1901},
  year      = {2020}
}

@article{shorinwa2024,
  author    = {Oladapo Shorinwa and Zhiting Mei and Justin Lidard and Allen Ren and Anirudha Majumdar},
  title     = {A survey on uncertainty quantification of large language models},
  journal   = {arXiv preprint arXiv:2412.05563},
  year      = {2024}
}

@article{yadkori2024,
  author    = {Yasin Abbasi Yadkori and Ilja Kuzborskij and David Stutz and others},
  title     = {Mitigating {LLM} hallucinations via conformal abstention},
  journal   = {arXiv preprint arXiv:2405.01563},
  year      = {2024}
}

@inproceedings{gal2016,
  author    = {Yarin Gal and Zoubin Ghahramani},
  title     = {Dropout as a {B}ayesian approximation: representing model uncertainty in deep learning},
  booktitle = {Proceedings of the 33rd International Conference on Machine Learning (ICML)},
  pages     = {1050--1059},
  year      = {2016}
}

@inproceedings{guo2017,
  author    = {Chuan Guo and Geoff Pleiss and Yu Sun and Kilian Q. Weinberger},
  title     = {On calibration of modern neural networks},
  booktitle = {Proceedings of the 34th International Conference on Machine Learning (ICML)},
  pages     = {1321--1330},
  year      = {2017}
}

@inproceedings{velickovic2022,
  author    = {Petar Veli{\v{c}}kovi{\'c}},
  title     = {Softmax is not enough (for sharp size generalisation)},
  booktitle = {International Conference on Learning Representations (ICLR)},
  year      = {2022}
}

@article{hullermeier2021,
  author    = {Eyke H{\"u}llermeier and Willem Waegeman},
  title     = {Aleatoric and epistemic uncertainty in machine learning: an introduction to concepts and methods},
  journal   = {Machine Learning},
  volume    = {110},
  number    = {3},
  pages     = {457--506},
  year      = {2021}
}

@article{valdenegro2022,
  author    = {Matias Valdenegro-Toro},
  title     = {A deeper look into aleatoric and epistemic uncertainty estimation},
  journal   = {arXiv preprint arXiv:2204.09308},
  year      = {2022}
}

@inproceedings{kuhn2023,
  author    = {Lorenz Kuhn and Yarin Gal and Sebastian Farquhar},
  title     = {Semantic uncertainty: linguistic invariances for uncertainty estimation in natural language generation},
  booktitle = {International Conference on Learning Representations (ICLR)},
  year      = {2023}
}

@inproceedings{manakul2023,
  author    = {Potsawee Manakul and Adian Liusie and Mark J.F. Gales},
  title     = {{SelfCheckGPT}: zero-resource black-box hallucination detection for generative {LLMs}},
  booktitle = {Proceedings of the 2023 Conference on Empirical Methods in Natural Language Processing (EMNLP)},
  year      = {2023}
}

@book{smarandache1998,
  author    = {Florentin Smarandache},
  title     = {A Unifying Field in Logics: Neutrosophy. {N}eutrosophic Probability, Set, and Logic},
  publisher = {American Research Press},
  address   = {Rehoboth, NM, USA},
  year      = {1998}
}

@article{mason2026,
  author    = {Tony Mason},
  title     = {From scalars to tensors: declared losses recover epistemic distinctions that neutrosophic scalars cannot express},
  journal   = {arXiv preprint arXiv:2604.09602},
  year      = {2026}
}

@article{smarandache2018,
  author    = {Florentin Smarandache},
  title     = {Plithogenic Set: An Extension of Crisp, Fuzzy, Intuitionistic Fuzzy, and Neutrosophic Sets---Revisited},
  journal   = {Neutrosophic Sets and Systems},
  volume    = {21},
  pages     = {153--166},
  year      = {2018}
}

@article{gabriel2020,
  author    = {Iason Gabriel},
  title     = {Artificial intelligence, values, and alignment},
  journal   = {Minds and Machines},
  volume    = {30},
  number    = {3},
  pages     = {411--437},
  year      = {2020}
}

@article{bender2021,
  author    = {Emily M. Bender and Timnit Gebru and Angelina McMillan-Major and Shmargaret Shmitchell},
  title     = {On the dangers of stochastic parrots: can language models be too big?},
  booktitle = {Proceedings of the 2021 ACM Conference on Fairness, Accountability, and Transparency (FAccT)},
  pages     = {610--623},
  year      = {2021}
}

@article{priest2006,
  author    = {Graham Priest},
  title     = {In Contradiction: A Study of the Transconsistent},
  publisher = {Oxford University Press},
  year      = {2006}
}

@article{belnap1977,
  author    = {Nuel D. Belnap},
  title     = {A useful four-valued logic},
  booktitle = {Modern Uses of Multiple-Valued Logic},
  editor    = {J. Michael Dunn and George Epstein},
  publisher = {Reidel},
  pages     = {8--37},
  year      = {1977}
}

@article{carnielli2007,
  author    = {Walter Carnielli and Jo{\~a}o Marcos and Sandra de Amo},
  title     = {Formal inconsistency and evolutionary databases},
  journal   = {Logic and Logical Philosophy},
  volume    = {8},
  pages     = {115--152},
  year      = {2007}
}

@article{wei2022,
  author    = {Jason Wei and Xuezhi Wang and Dale Schuurmans and others},
  title     = {Chain-of-thought prompting elicits reasoning in large language models},
  journal   = {Advances in Neural Information Processing Systems},
  volume    = {35},
  year      = {2022}
}

@article{kojima2022,
  author    = {Takeshi Kojima and Shixiang Shane Gu and Machel Reid and Yutaka Matsuo and Yusuke Iwasawa},
  title     = {Large language models are zero-shot reasoners},
  journal   = {Advances in Neural Information Processing Systems},
  volume    = {35},
  year      = {2022}
}

@article{kong2023,
  author    = {Aobo Kong and Shiwan Zhao and Hao Chen and others},
  title     = {Better zero-shot reasoning with role-play prompting},
  journal   = {arXiv preprint arXiv:2308.07702},
  year      = {2023}
}

@article{kadavath2022,
  author    = {Saurav Kadavath and Tom Conerly and Amanda Askell and others},
  title     = {Language models (mostly) know what they know},
  journal   = {arXiv preprint arXiv:2207.05221},
  year      = {2022}
}

@article{bai2022,
  author    = {Yuntao Bai and Saurav Kadavath and Sandipan Kundu and others},
  title     = {Constitutional {AI}: harmlessness from {AI} feedback},
  journal   = {arXiv preprint arXiv:2212.08073},
  year      = {2022}
}

@article{shannon1948,
  author    = {Claude E. Shannon},
  title     = {A mathematical theory of communication},
  journal   = {Bell System Technical Journal},
  volume    = {27},
  number    = {3},
  pages     = {379--423},
  year      = {1948}
}

\end{document}